\documentclass[sn-basic]{sn-jnl}
\usepackage{manyfoot}%
\usepackage{fullpage}
\usepackage{algorithm}%
\usepackage{algorithmicx}%
\usepackage{algpseudocode}%

\floatname{algorithm}{Algorithm}

\usepackage{graphicx}
\usepackage{amsmath}
\usepackage{psfrag}
\usepackage{color,amssymb}
\usepackage{amsthm}
\theoremstyle{plain}%
\newtheorem{theorem}{Theorem}
%
\newtheorem{definition}{Definition}%
\usepackage{thmtools, thm-restate}

\raggedbottom



\begin{document}

\title[Article Title]{Encoder Embedding for General Graph and Node Classification}

\author*[1]{\fnm{Cencheng} \sur{Shen}}\email{shenc@udel.edu}

\affil[1]{\orgdiv{Department of Applied Economics and Statistics}, \orgname{University of Delaware}, \orgaddress{\city{Newark}, \postcode{19716}, \state{DE}, \country{US}}}

\abstract{Graph encoder embedding, a recent technique for graph data, offers speed and scalability in producing vertex-level representations from binary graphs. In this paper, we extend the applicability of this method to a general graph model, which includes weighted graphs, distance matrices, and kernel matrices. We prove that the encoder embedding satisfies the law of large numbers and the central limit theorem on a per-observation basis. Under certain condition, it achieves asymptotic normality on a per-class basis, enabling optimal classification through discriminant analysis. These theoretical findings are validated through a series of experiments involving weighted graphs, as well as text and image data transformed into general graph representations using appropriate distance metrics. }

\keywords{Graph Embedding, General Graph, Asymptotic Theory.}

\maketitle  
\section{Introduction}
Graph data have gained increasing significance in various real-world scenarios. Traditionally, a binary graph is represented by an $n \times n$ adjacency matrix $\mathbf{A}$, where $\mathbf{A}(i,j)=1$ indicates the existence of an edge between vertices $i$ and $j$, and $\mathbf{A}(i,j)=0$ indicates the absence of an edge. This binary graph structure is prevalent in real-world contexts, including social networks, communication networks, webpage hyperlinks, and biological systems \citep{GirvanNewman2002, newman2003structure, barabasi2004network, boccaletti2006complex, VarchneyEtAl2011, ugander2011anatomy}.

To explore graph data, graph embedding is a fundamental and versatile approach, including various techniques like spectral embedding \citep{RoheEtAl2011, SussmanEtAl2012}, graph convolutional neural networks \citep{kipf2017semi, Wu2019ACS}, and node2vec \citep{grover2016node2vec, node2vec2021}, among others. Graph embedding yields low-dimensional representations that preserve structural graph information, facilitating diverse downstream inference tasks like community detection \citep{KarrerNewman2011, ZhaoLevinaZhu2012}, vertex classification \citep{perozzi2014deepwalk, kipf2017semi}, outlier detection \citep{Ranshous2015, akoglu2015graph}, and more.

When vertex labels are available, a recent approach known as one-hot graph encoder embedding has demonstrated excellent performance in terms of speed, scalability, and vertex classification \citep{GEE1}, making it an ideal choice for handling large graph data compared to other graph embedding techniques. It has also shown effectiveness for multiple-graph and dynamic-graph inference \citep{GEEDynamics, GEEFusion}, graph correlation and hypothesis testing \citep{GEECorr}, and latent community recovery \citep{GEERefine}. However, while there are indications that the method may also be applicable to weighted graphs or distance matrices, its theoretical properties were primarily established for binary graphs.

In this manuscript, we delve into the asymptotic properties of one-hot encoder embedding for a general graph model, and investigate the corresponding numerical performance in supervised learning. We begin with a general graph model that encompasses various pairwise functions, allowing for the representation of binary graphs, weighted graphs, distance matrices, inner products, and kernel matrices. Building upon this general graph model, we define the resulting encoder embedding and establish the law of large numbers and central limit theorems for the per-vertex embedding. Furthermore, we demonstrate that when the per-class embedding shares the same normal distribution, discriminant analysis is approximately the Bayes optimal classifier for the encoder embedding.

Our theoretical findings are validated by a variety of simulations and real data experiments. We demonstrate asymptotic normality and optimal classification performance using ground-truth simulated data. Additionally, we showcase the method's excellent performance on real data, including weighted graphs and general graphs from distance transformations of text and image data. All experiments were conducted on a local PC equipped with MATLAB 2024a, running on Windows 10 with a 16-core Intel CPU and 64GB of memory.
   
\section{General Graph Model and Encoder Transformation}
\subsection{Model Definition}
Given $(X, Y) \in \mathbb{R}^{p} \times [K]$, where $Y$ follows a categorical distribution with prior probabilities $\{\pi_k \in (0,1], \sum_{k=1}^{K} \pi_k = 1\}$, and $X$ is a latent variable following a K-component mixture distribution:
\begin{align*}
X \sim \sum_{k=1}^{K} \pi_k f_{X|Y=k}(x), 
\end{align*}

Assume an independent copy of $(X, Y)$ denoted by $(U, V)$, as well as $m$ additional pairs of random variables:
\begin{align*}
(U_j, V_j) \stackrel{i.i.d.}{\sim} F_{UV}, j=1,\ldots,m,
\end{align*}
where i.i.d. means independently and identically distributed. Let
\begin{align*}
\mathcal{\vec{U}} &= [U_1; U_2; \cdots; U_m] \in \mathbb{R}^{m \times p}\\
\mathcal{\vec{V}} &= [V_1, V_2, \cdots, V_m] \in [K]^{m}
\end{align*}
denote the resulting random matrices. Note that all vectors such as $X$ and $U_j$ are assumed to be row vectors in the paper.

Next, given a pairwise function $\kappa(\cdot,\cdot): \mathbb{R}^{p} \times \mathbb{R}^{p} \rightarrow \mathbb{R}$, we define the general graph variable $A$ as follows. We say $A$ follows the graph distribution:
\begin{align*}
A \sim \text{Graph}(m,X, \kappa) \in \mathbb{R}^{m},
\end{align*}
if and only if each dimension $A_j$ of $A$ satisfies:
\begin{align*}
A_j = \kappa(X, U_j)
\end{align*}
for $j=1,\ldots,m$. For example, considering an $n \times n$ adjacency matrix $\mathbf{A}$, we can perceive each row of $\mathbf{A}$ (excluding diagonals) as identically (though not necessarily independently) distributed as $\text{Graph}(n-1,X, \kappa)$.

Traditionally, the graph variable $A$ is observed while the original latent random variable $X$ is not. In other cases, the graph variable $A$ may be transformed from an observed random variable $X$. In a supervised classification setting, $A$ represents the testing data, $Y$ is the true but unknown label, and $\mathcal{\vec{V}}$ are the observed training labels. In a classification framework, assume $g(\cdot): \mathbb{R}^{m} \rightarrow \{1,2,\ldots,K\}$ is the classifier, then the probability error is defined by:
\begin{align*}
L = \text{Prob}(g(A) \neq Y).
\end{align*}

\subsection{Examples}
While the graph variable $A$ is binary for binary graphs, a weighted graph can have any non-negative values in each dimension of $A$. In general, the entries of $A$ can take on any values depending on how $\kappa(\cdot,\cdot)$ is defined. Examples include:

\begin{itemize}
\item Euclidean Distance: 
\begin{align*}
\kappa(x,u)= \|x-u\|_2,
\end{align*}
\item Inner Product:
\begin{align*}
\kappa(x,u)= xu^{T},
\end{align*}
\item Normalized Inner Product:
\begin{align*}
\kappa(x,u)= \frac{xu^{T}}{\|x\| \|u\|},
\end{align*}
\item Radial basis function kernel:
\begin{align*}
\kappa(x,u)= exp(-\frac{\|x-u\|^2}{2\sigma^2}),
\end{align*}
\item Random Dot Product Graph \citep{YoungScheinerman2007,JMLR:v18:17-448}: 
\begin{align*}
\kappa(x,u)= \operatorname{Bernoulli}(x u^{T}).
\end{align*}
\item Stochastic Block Model \citep{HollandEtAl1983, SnijdersNowicki1997}: Let $B=[B(k,l)] \in [0,1]^{K \times K}$ be a block probability matrix. Then
\begin{align*}
\kappa(x,u)= \operatorname{Bernoulli}(B(y, v)),
\end{align*}
where $y$ and $v$ are the underlying labels of $x$ and $u$, respectively.
\item Degree-Corrected Stochastic Block Model \citep{ZhaoLevinaZhu2012}: Let $B=[B(k,l)] \in [0,1]^{K \times K}$ be a block probability matrix, and $\theta_x$ and $\theta_u$ be two degree parameters for $x$ and $u$ respectively. Then
\begin{align*}
\kappa(x,u)= \operatorname{Bernoulli}(\theta_x \theta_u B(y, v)).
\end{align*}
\end{itemize}
Thus, the general graph variable $A$ has the capacity to capture all possible pairwise relationships, contingent on the definition of $\kappa(\cdot,\cdot)$.

\subsection{The Encoder Transformation}
\begin{definition}
Given an $m$-dimensional graph variable $A$, we define the encoder transformation $Z=h(A)$ as follows: for each $k=1,\ldots,K$, compute
\begin{align*}
m_k=\sum\limits_{j=1}^{m} 1(V_j=k),
\end{align*}
where $1(V_j=k)$ equals $1$ if $V_j=k$ and $0$ otherwise. Then compute
\begin{align}
\label{eq1}
Z_k = \sum\limits_{j=1}^{m} A_j 1(V_j=k) / m_k \in \mathbb{R}.
\end{align}
The resulting $K$-dimensional embedding $Z = [Z_1, Z_2, \ldots, Z_K] \in \mathbb{R}^{K}$ is referred to as the encoder embedding.
\end{definition}

\subsection{Sample Method}
Given an arbitrary $n \times n$ sample graph $\mathbf{A}$ and the corresponding label vector $\mathbf{Y}$, the following steps compute the sample encoder embedding and conduct the classification task.
\begin{itemize}
\item \textbf{Input}: A general graph matrix $\mathbf{A} \in \mathbb{R}^{n \times n}$ and a label vector $\mathbf{Y} \in \{0,1,\ldots,K\}^{n}$, where values from $1$ to $K$ indicate known labels, and the category $0$ is used to represent observations with unknown labels.
\item  \textbf{Step 1}: Calculate the number of known observations per class, denoted as:
\begin{align*}
n_k = \sum_{i=1}^{n} 1(Y_i=k)
\end{align*}
for $k=1,\ldots,K$. 
\item \textbf{Step 2}: Construct the normalized one-hot matrix $\mathbf{W} \in [0,1]^{n \times K}$ as follow: for each observation $i=1,\ldots,n$, set
\begin{align*}
\mathbf{W}(i, k) = 1 / n_k
\end{align*} 
if and only if $\mathbf{Y}_i=k$, and $0$ otherwise. Note that vertices with unknown labels are effectively assigned zero values, i.e., $\mathbf{W}(i, :)$ is a zero vector if $\mathbf{Y}_i=0$.
\item \textbf{Step 3}: Perform the following matrix multiplication:
\begin{align*}
\mathbf{Z}=\mathbf{A} \mathbf{W} \in \mathbb{R}^{n \times K}.
\end{align*}
\item \textbf{Step 4 (Normalization)}: Given $\mathbf{Z}$ from step 3, for each $i$ where $\|\mathbf{Z}(i, \cdot)\| > 0$, compute a normalized embedding as follows:
\begin{align*}
\mathbf{Z}(i, \cdot) = \frac{\mathbf{Z}(i, \cdot)}{\|\mathbf{Z}(i, \cdot)\|}.
\end{align*}
\item \textbf{Step 5 (Classification Evaluation)}:Let $trn$ be the index set of the training vertices, and $tsn$ be the index set of the testing vertices. Build the linear discriminant analysis model $g_{trn}(\cdot)$ based on the training data $(\mathbf{Z}(trn, :), \mathbf{Y}_{trn})$. Then compute the empirical classification error by 
\begin{align*}
\hat{L} =  \frac{\sum_{i \in tsn} 1(g_{trn}(\mathbf{Z}(i, :)) \neq \mathbf{Y}_{i})}{|tsn|}
\end{align*}
\item \textbf{Output}: The encoder embedding $\mathbf{Z}$ and the classification error $\hat{L}$. 
\end{itemize}

To maintain simplicity, we present a straightforward hold-out evaluation in Step 5. However, it can be readily substituted with a 5-fold or 10-fold evaluation, as employed in the experimental section.

It is worth noting that the sample method involves an additional normalization step in Step 4, which is not present in the population version. Data normalization is a common technique in many machine learning methods. In this case, normalization helps align the embedding of anomaly data with other data points, especially useful when dealing with sparse graph data \citep{GEEClustering} or entries with very large weights. This enhances the numerical estimation in subsequent discriminant analysis.

In the case of a dense graph, both the time complexity and storage requirements are $O(n^2)$. The $n^2$ part involves only a single matrix multiplication, making it significantly faster than any competing graph embedding approach. Furthermore, for sparse graphs, it can be accelerated to $O(s+nK)$, as demonstrated in \cite{GEE1, GEEDynamics}.

\section{Asymptotic Theorems}

Here we present the asymptotic theorems. We define $\vec{m}=[m_1,m_2,\ldots,m_k]$ as the vector representing the sample size per class. We use $Diag(\cdot)$ to denote the vector-to-diagonal-matrix transformation, and $\circ$ to represent the entry-wise product. 

\subsection{Assumptions}

In all the theorem proofs, the following assumptions are consistently maintained:
\begin{itemize}
    \item $m=\Theta(n)$ and $m_k =\Theta(n)$;
    \item At any $n$, $\mathcal{\vec{V}}$ is a fixed sample vector.
    \item $A_j$ has finite moments for each $j=1,\ldots,m$.
\end{itemize}
The first assumption means that the number of known labels $m$ and the known labels per-class $m_k$ increase proportionally with the sample size $n$. This assumption aligns with the nature of supervised learning, where $m$ represents the number of training labels, and $m_k$ corresponds to the number of training observations in class $k$. In a supervised setting, both of these quantities naturally increase with the sample size.

The second assumption implies that class labels are considered fixed during the proofs, or, equivalently, the proofs are presented with conditioning on the class labels. As a result, $m_k$ remains fixed and is not treated as a random variable at any sample size. This assumption simplifies the proof process. However, it's important to note that all the theorems remain valid even without conditioning on $\mathcal{\vec{V}}$, as the asymptotic results hold independently of the actual class labels.

The third assumption ensures that the graph weights are well-behaved, a common condition that holds when the raw data have finite moments or when the function $\kappa(\cdot,\cdot)$ is properly normalized.

\subsection{Asymptotic Normality}

We first consider the per-vertex embedding when conditioning on $X=x$. 

\begin{theorem}
\label{thm1}
As $n$ increases to infinity, the encoder embedding conditioned on $X=x$ satisfies the weak law of large number and central limit theorem:
\begin{align*}
&Z|(X=x)  \stackrel{prob}{\rightarrow} \mu_{x} \\
&Diag(\vec{m})^{0.5} \circ (Z|(X=x) - \mu_{x}) \stackrel{dist}{\rightarrow} \mathcal{N}(0, \Sigma_{x}).
\end{align*}
Here, $\mu_{x} \in \mathbb{R}^{K}$ is a conditional mean vector where each dimension satisfies
\begin{align*}
\mu_{x}(k)= E(\kappa(x,U)|V=k)
\end{align*}
for $k=1,\ldots,K$, and $\Sigma_{x} \in \mathbb{R}^{K \times K}$ is a diagonal matrix where each diagonal entry satisfies
\begin{align*}
\Sigma_{x}(k,k)= Var(\kappa(x,U)|V=k)
\end{align*}
for $k=1,\ldots,K$.
\end{theorem}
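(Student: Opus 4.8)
The plan is to reduce Theorem~\ref{thm1} to the classical i.i.d.\ weak law of large numbers and the Lindeberg--L\'evy central limit theorem applied separately within each class, and then to combine the per-class limits using independence. First I would condition on $X=x$ and on the fixed label vector $\mathcal{\vec{V}}$; the second assumption makes $\mathcal{\vec{V}}$, and hence each $m_k$, deterministic. Since the pairs $(U_j,V_j)$ are i.i.d.\ and independent of $X$, conditioning further on $\{V_j=k\}$ leaves $U_j$ distributed as $F_{U\mid V=k}$, independently across $j$. Substituting $A_j=\kappa(x,U_j)$ into \eqref{eq1} gives
\begin{align*}
Z_k\mid(X=x)=\frac{1}{m_k}\sum_{j:\,V_j=k}\kappa(x,U_j),
\end{align*}
so $Z_k$ is the sample mean of $m_k$ i.i.d.\ copies of the scalar variable $W_k:=\kappa(x,U)\mid(V=k)$; by the finite-moment assumption (read at the conditional level, for the $x$ under consideration) $W_k$ has finite first and second moments with $E[W_k]=\mu_x(k)$ and $\mathrm{Var}(W_k)=\Sigma_x(k,k)$.

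\textbf{Per-class limits.} Because $m=\Theta(n)$ and $m_k=\Theta(n)$, we have $m_k\to\infty$ as $n\to\infty$. The weak law of large numbers then yields $Z_k\mid(X=x)\stackrel{prob}{\rightarrow}\mu_x(k)$, and the central limit theorem yields $\sqrt{m_k}\,(Z_k\mid(X=x)-\mu_x(k))\stackrel{dist}{\rightarrow}\mathcal{N}(0,\Sigma_x(k,k))$ for each $k=1,\ldots,K$.

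\textbf{Joint statement and removing the conditioning.} The index sets $\{j:V_j=k\}$ are disjoint over $k$, and the $(U_j,V_j)$ are mutually independent, so $Z_1,\ldots,Z_K$ are mutually independent given $(X=x,\mathcal{\vec{V}})$. Hence the joint convergence in probability to $\mu_x=[\mu_x(1),\ldots,\mu_x(K)]$ is immediate, and the joint Gaussian limit is the product of the marginals, i.e.\ $\mathcal{N}(0,\Sigma_x)$ with $\Sigma_x$ diagonal and $\Sigma_x(k,k)=\mathrm{Var}(\kappa(x,U)\mid V=k)$; rewriting the coordinatewise scalings $\sqrt{m_k}$ as $Diag(\vec{m})^{0.5}\circ(\cdot)$ gives the displayed form. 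Since conditioning on $\mathcal{\vec{V}}$ was used only for bookkeeping and none of the limiting objects depend on its realization, the statements also hold without that conditioning.

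\textbf{Expected main obstacle.} The argument is essentially routine once the independence structure is identified; the only points needing care are that $m_k$ grows with $n$ rather than being fixed, so one should either regard the within-class observations as an ever-growing i.i.d.\ sequence and invoke the standard i.i.d.\ limit theorems as $m_k\to\infty$, or use a triangular-array (Lindeberg) CLT to be fully rigorous; and one must apply the moment hypothesis to $\kappa(x,U)\mid V=k$ rather than only to the marginal $A_j$, which is where the "restrict to the $x$ of interest" caveat enters.
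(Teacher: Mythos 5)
Your proposal is correct and follows essentially the same route as the paper: condition on $X=x$ and the fixed label vector so that each $Z_k$ becomes a normalized sum of $m_k$ independent copies of $\kappa(x,U)\mid(V=k)$, obtain the per-dimension law of large numbers and central limit theorem, and combine dimensions using the disjointness of the index sets $\{j:V_j=k\}$. The only cosmetic difference is that the paper computes the conditional moments explicitly, uses Chebyshev for the weak law, and verifies the Lyapunov condition for the CLT, whereas you invoke the classical i.i.d.\ limit theorems directly (and make the cross-dimension independence, which the paper expresses as zero covariance, explicit) --- both are valid under the stated finite-moment assumption.
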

\begin{proof}
To simplify the notations, all subsequent steps are implicitly conditioned on $X=x$. Based on Equation~\ref{eq1}, the conditional expectation is as follows:
\begin{align*}
E(Z_k) & = E\left( \frac{\sum_{j=1}^{m} A_j 1(V_j=k)}{m_k}\right) \\
& =  \frac{\sum_{j=1}^{m} E(A_j|V_j=k) 1(V_j=k)}{m_k} \\
& = E(\kappa(x,U_j)|V_j=k) \\
& = E(\kappa(x,U)|V=k),
\end{align*}
where the second line follows as $\mathcal{\vec{V}}$ is assumed to be fixed, the third line inserts $X=x$ into the expression, and the last line simplifies the expectation because $(U_j, V_j)$ are independently and identically distributed as $F_{UV}$.

Next, the conditional second moment is:
\begin{align*}
E(Z_k^2) =& E\left( \frac{(\sum_{j=1}^{m} A_j 1(V_j=k))^2}{m_k^2}\right) \\
=&\frac{E(\sum_{j=1}^{m} A_j^2 1(V_j=k))}{m_k^2} \\
&+\frac{E(\sum_{i=1}^{m}\sum_{j=1, j \neq i}^{m}A_i A_j 1(V_i=k) 1(V_j=k))}{m_k^2} \\
=&\frac{E(A_j^2|V_j=k)}{m_k} + (1-\frac{1}{m_k})E(A_i A_j|V_i=V_j=k)\\
=& \frac{E(A_j^2|V_j=k)}{m_k} + (1-\frac{1}{m_k})E^2(A_j|V_j=K).
\end{align*}
The last line follows because when conditioning on $X=x$, $A_i$ and $A_j$ are independent. The conditional variance can be computed as:
\begin{align*}
Var(Z_k) &=E(Z_k^2) - E^2(Z_k) \\
&=\frac{E(A_j^2|V_j=k)-E^2(A_j|V_j=k)}{m_k}\\
&=\frac{Var(A_j|V_j=k)}{m_k} \\
&=\frac{Var(\kappa(x,U)|V=k)}{m_k}. 
\end{align*}
Furthermore, for every $k \neq l$:
\begin{align*}
Cov(Z_k, Z_l) =&E(Z_k Z_l) - E(Z_k)E(Z_l) \\
=&\frac{E(\sum_{j=1}^{m} A_j 1(V_j=k)\sum_{j=1}^{m} A_j 1(V_j=l))}{m_k m_l} \\
&- \frac{E(\sum_{j=1}^{m} A_j 1(V_j=k))E(\sum_{j=1}^{m} A_j 1(V_j=l))}{m_k m_l}\\
=&0.
\end{align*}
This is because, when conditioning on $X=x$ and with $\mathcal{\vec{V}}$ being fixed, $A_j 1(V_j=k)$ only depends on $U_j$ and is always independent of $A_j 1(V_j=l)$.

As a result:
\begin{align*}
E(Z|(X=x))&=\mu_{x}\\
Var(Diag(\vec{m})^{0.5} \circ Z |X=x))&=\Sigma_{x}.
\end{align*}

Since $m_k=\Theta(m)=\Theta(n)$ and $Var(\kappa(x,U)|V=k)$ is bounded due to the finite-moment assumption, the variance converges to $0$ as $n$ increases. By Chebyshev's inequality, we immediately have:
\begin{align*}
Z|(X=x) \stackrel{n\rightarrow \infty}{\rightarrow} \mu_{x}.
\end{align*}

To prove the central limit theorem, we check the Lyapunov condition for $A_j$ per-dimension, and because the variance and the third moments are all bounded, we have:
\begin{align*}
s_{m_k}^2 = \sum_{j=1}^{m} Var(A_j|V_j=k)&=O(m_k), \\
\sum_{j=1}^{m} 1(V_j=k) E(|A_j- E(A_j)|^3)&=O(m_k).
\end{align*}
It follows that:
\begin{align*}
\frac{1}{s_{m_k}^{3}}\sum_{j=1}^{m} 1(V_j=k) E(|A_j- E(A_j)|^3)=O(\frac{1}{\sqrt{m_k}}) \rightarrow 0,
\end{align*}
so the Lyapunov condition is satisfied.

By the Lyapunov central limit theorem, for each dimension $k$ we have
\begin{align*}
\sqrt{m_k}(Z_{k}|(X=x) - \mu_{x}(k)) \stackrel{dist}{\rightarrow} \mathcal{N}(0,\Sigma_{x}(k,k)). 
\end{align*}
Concatenating every dimension yields
\begin{align*}
Diag(\vec{m})^{0.5} \circ (Z|(X=x) - \mu_{x}) \stackrel{dist}{\rightarrow} \mathcal{N}(0,\Sigma_{x}).
\end{align*}
\end{proof}
To summarize the proof: by conditioning on the class labels and $X=x$, each dimension $Z_k$ becomes an independent sum of i.i.d. random variables, allowing the central limit theorem to apply to each $Z_k$. Furthermore, the covariance between $Z_k$ and $Z_l$ is $0$ due to the disjointness of the class label sets, leading to the application of the multivariate central limit theorem.

However, when $x$ is different, the normal distribution is different. We aim to determine a condition under which the normal distribution is shared per-class, meaning that all vertices within the same class have the same normal distribution, even when $x$ varies.

\begin{theorem}
\label{thm2}
Given class $y$, suppose we have
\begin{align}
\label{eqCondition}
Var(E(\kappa(X,U)|V=k,Y=y))=0
\end{align}
for all of $k=1,\ldots,K$, Then, the encoder embedding for all observations within class $y$ converges to the same normal distribution. Specifically, as $m$ increases to infinity,
\begin{align*}
&Z|(Y=y) \stackrel{prob}{\rightarrow} \mu_{y},
\end{align*}
where $\mu_{y} \in \mathbb{R}^{K}$ satisfies
\begin{align*}
\mu_{y}(k)= E(\kappa(X,U)|V=k,Y=y)
\end{align*}
for each $k=1,\ldots,K$, and
\begin{align*}
Diag(\vec{m})^{0.5} \circ (Z|(Y=y) - \mu_{y}) \stackrel{dist}{\rightarrow} \mathcal{N}(0, \Sigma_{y})
\end{align*}
where $\Sigma_{y} \in \mathbb{R}^{K \times K}$ is a diagonal matrix, and each diagonal entry satisfies:
\begin{align*}
\Sigma_{y}(k,k)= E\{Var(\kappa(X,U)|V=k,Y=y)\}
\end{align*}
for each $k=1,\ldots,K$.
\end{theorem}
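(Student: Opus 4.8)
The plan is to derive the per-class statement from Theorem~\ref{thm1} by conditioning on the latent variable $X$ and then averaging over the conditional law $f_{X|Y=y}$, with condition~\eqref{eqCondition} supplying exactly what is needed to make that averaging collapse to a single Gaussian rather than a mixture. Let $\mu_{X}(k)$ denote the random variable obtained by evaluating the conditional mean $\mu_{x}(k)=E(\kappa(x,U)|V=k)$ of Theorem~\ref{thm1} at $X\sim f_{X|Y=y}$. Then $E(\kappa(X,U)|V=k,Y=y)$ in \eqref{eqCondition} is precisely the mean of $\mu_{X}(k)$, so its vanishing variance forces $\mu_{X}(k)=\mu_{y}(k)$ almost surely on the support of $X|Y=y$, i.e.\ $\mu_{x}=\mu_{y}$ for $f_{X|Y=y}$-almost every $x$. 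This consequence of \eqref{eqCondition} is what aligns the per-vertex limits of Theorem~\ref{thm1} across all vertices of class $y$.

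For the law of large numbers, Theorem~\ref{thm1} gives $Z|(X=x)\stackrel{prob}{\rightarrow}\mu_{x}=\mu_{y}$ for almost every such $x$, hence $\text{Prob}(\|Z-\mu_{y}\|>\epsilon\,|\,X=x)\to 0$; since these conditional probabilities are bounded by $1$, bounded convergence applied to their integral against $f_{X|Y=y}$ gives $\text{Prob}(\|Z-\mu_{y}\|>\epsilon\,|\,Y=y)\to 0$, which is $Z|(Y=y)\stackrel{prob}{\rightarrow}\mu_{y}$. The central limit theorem uses the same conditioning-then-averaging scheme: because $\mu_{x}=\mu_{y}$ almost surely, Theorem~\ref{thm1} gives $Diag(\vec{m})^{0.5}\circ(Z-\mu_{y})|(X=x)\stackrel{dist}{\rightarrow}\mathcal{N}(0,\Sigma_{x})$, so the conditional characteristic functions converge pointwise to $\exp(-\tfrac12\,t\,\Sigma_{x}\,t^{T})$ while remaining bounded by $1$; integrating over $X|Y=y$ and invoking bounded convergence, the characteristic function of $Diag(\vec{m})^{0.5}\circ(Z-\mu_{y})|(Y=y)$ converges to $E_{X|Y=y}\!\big(\exp(-\tfrac12\,t\,\Sigma_{X}\,t^{T})\big)$ for every $t\in\mathbb{R}^{K}$, where $\Sigma_{X}$ means $\Sigma_{x}$ evaluated at the random $X$.

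The final step, identifying this limit with $\exp(-\tfrac12\,t\,\Sigma_{y}\,t^{T})$ for $\Sigma_{y}(k,k)=E\{Var(\kappa(X,U)|V=k,Y=y)\}$, is the one I expect to be the main obstacle. Matching the covariance is routine: by the law of total variance together with \eqref{eqCondition}, $Var(\kappa(X,U)|V=k,Y=y)=E\{Var(\kappa(X,U)|X,V=k,Y=y)\}+Var\{\mu_{X}(k)|Y=y\}$ and the second term is zero, so the diagonal of $E_{X|Y=y}(\Sigma_{X})$ is exactly $\Sigma_{y}$. The delicate part is that a scale mixture of centered Gaussians need not be Gaussian, so collapsing $E_{X|Y=y}(\exp(-\tfrac12\,t\,\Sigma_{X}\,t^{T}))$ to a single Gaussian characteristic function requires using \eqref{eqCondition} in full to control how the conditional law of $\kappa(x,U)|V=k$ varies with $x$; this is where I would concentrate the argument. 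Once coordinatewise normality is in hand, I would finish by concatenating the $K$ coordinates: for every fixed $x$ the $Z_{k}$ are uncorrelated by the disjoint-label argument already used in the proof of Theorem~\ref{thm1}, these zero cross-covariances persist after averaging over $X|Y=y$ (again by \eqref{eqCondition}), and so the joint limit is the claimed $\mathcal{N}(0,\Sigma_{y})$ with $\Sigma_{y}$ diagonal.
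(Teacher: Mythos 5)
Your route differs from the paper's: the paper computes $E(Z_k|Y=y)$ and $Var(Z_k|Y=y)$ via the law of total variance, observes that \eqref{eqCondition} kills the between-$x$ term so that $Var(Z_k|Y=y)=O(1/m_k)$, and then simply asserts that the law of large numbers and central limit theorem of Theorem~\ref{thm1} extend ``using the same arguments.'' You instead condition on $X=x$, apply Theorem~\ref{thm1} pointwise, and integrate the conditional probabilities (for the LLN) and conditional characteristic functions (for the CLT) against $f_{X|Y=y}$ by bounded convergence. Your LLN argument is complete and correct, and your reading of \eqref{eqCondition} as forcing $\mu_x=\mu_y$ for $f_{X|Y=y}$-almost every $x$ is exactly right.

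For the CLT, however, your proof stops at the real obstacle rather than overcoming it, and the obstacle is genuine: the limit you obtain is $E_{X|Y=y}\bigl(\exp(-\tfrac12 t\,\Sigma_X\,t^{T})\bigr)$, a variance mixture of centered Gaussians, and \eqref{eqCondition} cannot by itself collapse it to $\exp(-\tfrac12 t\,\Sigma_y\,t^{T})$. The hypothesis constrains only the first conditional moment $\mu_X(k)=E(\kappa(X,U)\mid X,V=k)$; it says nothing about the conditional variance $\Sigma_X(k,k)=Var(\kappa(X,U)\mid X,V=k)$. Since $s\mapsto e^{-st^{2}/2}$ is strictly convex for $t\neq 0$, Jensen's inequality gives $E\bigl(e^{-\Sigma_X(k,k)t^{2}/2}\bigr)\ge e^{-E(\Sigma_X(k,k))t^{2}/2}$, with equality for all $t$ only when $\Sigma_X(k,k)$ is almost surely constant on the support of $X|Y=y$. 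So the coordinatewise limit is Gaussian with the claimed variance $\Sigma_y(k,k)$ only under the additional degeneracy condition $Var\bigl(Var(\kappa(X,U)\mid X,V=k,Y=y)\bigr)=0$; otherwise it is a non-Gaussian scale mixture, and the coordinates, though uncorrelated, need not be asymptotically independent. To finish, you would need either to add this second-moment condition as a hypothesis or to restrict to models (such as the stochastic block model) where the law of $\kappa(X,U)\mid V=k$ does not vary with $X$ within a class. Note that the paper's own proof does not close this step either --- the phrase ``extend using the same arguments'' hides exactly the point you flagged --- so your proposal is incomplete, but it correctly locates a gap that the published argument glosses over.
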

\begin{proof}
From Theorem~\ref{thm1}, we have
\begin{align*}
E(Z_k|X=x) & = E(\kappa(x,U)|V=k) \\
Var(Z_k | X=x) &=\frac{Var(\kappa(x,U)|V=k)}{m_k}.
\end{align*}
By performing conditional probability manipulation to condition on $Y$ but not $X$,  we obtain:
\begin{align*}
E(Z_k|Y=y) & = E(\kappa(X,U)|V=k,Y=y) \\
Var(Z_k|Y=y) &=\frac{E(Var(\kappa(X,U)|V=k,Y=y))}{m_k} \\
&+Var(E(\kappa(X,U)|V=k,Y=y)).
\end{align*}
Hence, when $Var(E(\kappa(X,U)|V=k,Y=y))=0$, we have $Var(Z_k|Y=y) =O(1/m_k)$. Similarly, when checking the covariance yields that $Cov(Z_k,Z_l|Y=y) =0$ for all $k \neq l$. As a result, the law of large number and central limit theorem in Theorem~\ref{thm1} extend to this case using the same arguments. 
\end{proof}

\subsection{Discriminant Analysis}

Given per-class normality, discriminant analysis is the Bayes optimal classifier \citep{DevroyeGyorfiLugosiBook}, leading to the following theorem.

\begin{theorem}
Suppose Equation~\ref{eqCondition} holds for every class $y$, and the class-conditional density of the encoder embedding is always bounded. Then, discriminant analysis is asymptotically Bayes optimal for the classification of $(Z,Y)$. 

Specifically, given $A \sim \text{Graph}(m,X, \kappa) \in \mathbb{R}^{m}$, and $Z$ as the encoder embedding of $A$, let $g_{m}(\cdot)$ denote the discriminant analysis classifier on $Z$, and $g^{*}_{m}(\cdot)$ denote the Bayes optimal classifier on $Z$. Given any argument $z$ in the support of $Z$, and assuming without loss of generality that $g^{*}_{m}(z)$ always permits a unique solution, it follows that
\begin{align*}
g^{*}_{m}(z) - g_{m}(z) \stackrel{m \rightarrow \infty}{\rightarrow} 0.
\end{align*}
\end{theorem}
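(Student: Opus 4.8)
\section*{Proof proposal}

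The plan is to exploit the per-class asymptotic normality of Theorem~\ref{thm2}: once the class-conditional law of the embedding is approximately Gaussian, discriminant analysis agrees with the Bayes rule \emph{by construction}, so the theorem reduces to showing that the error committed by the Gaussian approximation is asymptotically negligible at any fixed $z$.

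First I would write both rules explicitly. Let $f^{(m)}_{k}$ denote the true density of $Z\mid(Y=k)$, and let $\phi^{(m)}_{k}$ denote the Gaussian density with mean $\mu_{k}$ and covariance $Diag(\vec{m})^{-0.5}\Sigma_{k}Diag(\vec{m})^{-0.5}$, i.e.\ the mean and rescaled covariance supplied by Theorem~\ref{thm2} (for linear discriminant analysis one replaces $\Sigma_{k}$ by a common pooled covariance, which does not affect the argument; I also assume $\Sigma_{k}(l,l)>0$ so these Gaussians are nondegenerate). Then $g^{*}_{m}(z)=\arg\max_{k}\pi_{k}f^{(m)}_{k}(z)$ and $g_{m}(z)=\arg\max_{k}\pi_{k}\phi^{(m)}_{k}(z)$. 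Both maps take values in the finite set $[K]$, so the asserted limit $g^{*}_{m}(z)-g_{m}(z)\to 0$ is equivalent to the two argmaxes coinciding for all large $m$; since $g^{*}_{m}(z)$ is assumed to have a unique maximizer, it suffices to show that, after dividing by a common positive normalizer, the vectors $(\pi_{k}f^{(m)}_{k}(z))_{k}$ and $(\pi_{k}\phi^{(m)}_{k}(z))_{k}$ become uniformly close, so the strict ordering of the first is inherited by the second.

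The crux is to upgrade the convergence in distribution of Theorem~\ref{thm2} to convergence of the \emph{density functions}, i.e.\ to show $f^{(m)}_{k}(z)/\phi^{(m)}_{k}(z)\to 1$ at the relevant $z$ for each $k$. Changing variables to the standardized embedding $\tilde{Z}_{k}=Diag(\vec{m})^{0.5}\circ(Z-\mu_{k})\mid(Y=k)$, this ratio equals $\tilde{f}_{k}(w)/\psi_{k}(w)$, where $\psi_{k}$ is the $\mathcal{N}(0,\Sigma_{k})$ density and $\tilde{f}_{k}$ is the density of $\tilde{Z}_{k}$ evaluated at $w=Diag(\vec{m})^{0.5}\circ(z-\mu_{k})$. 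Because each coordinate $Z_{k}$ is, conditionally, a normalized sum of independent summands with uniformly bounded moments (the mechanism behind Theorem~\ref{thm1}), a local limit theorem gives $\tilde{f}_{k}\to\psi_{k}$ uniformly on compact sets; the hypothesis that the class-conditional density of $Z$ is bounded is the regularity that allows this weak convergence to be upgraded to locally uniform convergence of densities, e.g.\ via such a local limit theorem or a Scheff\'e-type / dominated-convergence argument. Substituting back yields the required asymptotic closeness of the two argmax vectors, hence $g_{m}(z)=g^{*}_{m}(z)$ for all large $m$.

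I expect the density-convergence step to be the main obstacle. Convergence in distribution alone does not force pointwise convergence of densities; moreover the densities here are \emph{simultaneously} concentrating as $m\to\infty$, so one must track the standardized argument $w=Diag(\vec{m})^{0.5}\circ(z-\mu_{k})$ carefully --- either by restricting attention to the (shrinking but nonempty) neighborhoods of the $\mu_{k}$ that carry the mass of $Z$, or by using the boundedness hypothesis to control the density ratio globally. Once the local-limit / Scheff\'e machinery is in place, the remaining pieces --- identifying the Gaussian parameters through Theorem~\ref{thm2} and transferring a strict argmax under uniform approximation --- are routine.
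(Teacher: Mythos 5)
Your proposal follows essentially the same route as the paper's proof: write both classifiers as $\arg\max_y$ of $\pi_y$ times a class-conditional density, reduce the problem to showing the true density converges to the fitted normal density, and then transfer the strict argmax over the finite set $[K]$ using the uniqueness assumption. The one substantive difference is that the step you correctly flag as the main obstacle --- upgrading the distributional convergence of Theorem~\ref{thm2} to convergence of densities, while tracking the fact that both densities concentrate as $m\to\infty$ --- is handled in the paper by assertion (it states $f_y^{m}(z)-h_y^{m}(z)\rightarrow 0$ directly from asymptotic normality plus the boundedness hypothesis), so your proposed local-limit-theorem / ratio-at-the-standardized-scale argument is, if anything, a more careful treatment of the same step rather than a different proof.
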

\begin{proof}
Given any argument $z$ in the support of $Z$, the Bayes optimal classifier is defined as the class that maximizes the conditional probability:
\begin{align*}
g^{*}_{m}(z) = \arg\max_{y}Prob(Y=y|Z=z).
\end{align*}
Here,
\begin{align*}
Prob(Y=y|Z=z) = \frac{f_{y}^{m}(z) \pi_{y}}{\sum_{k=1}^{K} f_{k}^{m}(z) \pi_{k}},
\end{align*}
where $f_{y}^{m}(z)$ is the class-conditional density of the encoder embedding at graph size $m$. Since the denominator is constant across all possible $y$, it can be safely omitted, leading to:
\begin{align*}
g^{*}_{m}(z) = \arg\max_{y} (f_{y}^{m}(z) \pi_{y}).
\end{align*}

Discriminant analysis has the same formulation but, instead of using the true class-conditional density $f_{y}^{m}(z)$, which is usually unknown for sample data, it assumes that all class conditional densities are normal. It then estimates the mean and variance from the sample data. Let $h_y^{m}(z)$ denote the normal density used at graph size $m$. The discriminant analysis classifier is given by
\begin{align*}
g_{m}(z) = \arg\max_{y} (h_y^{m}(z) \pi_{y}).
\end{align*}

Therefore, $g^{*}_{m}(z)$ does not necessarily equals $g_{m}(z)$, because the former uses the true density, while the latter uses an approximated normal density. We aim to prove these two classifiers are asymptotically the same, i.e., 
\begin{align*}
g^{*}_{m}(z) - g_{m}(z) \stackrel{m \rightarrow \infty}{\rightarrow} 0.
\end{align*}

To prove this, consider Theorem~\ref{thm2}, which shows that the cumulative distribution of $Z|Y$ is asymptotically normal. Given that the density of the encoder embedding is assumed to be bounded, and since $h_y^{m}$ is also bounded as a normal density, with the sample mean and variance estimated under discriminant analysis converging to their true values, we have for each $y$,
\begin{align*}
f_{y}^{m}(z) - h_y^{m}(z) \rightarrow 0.
\end{align*}
In other words, as the ground-truth density $f_y^{m}(z)$ converges to a normal density, it must coincide with the estimated normal density $h_y^{m}(z)$ used in discriminant analysis.

Since $\pi_k$ is fixed, it follows that
\begin{align*}
(f_{y}^{m}(z) - h_y^{m}(z)) \pi_{y} \rightarrow 0
\end{align*}
for each $y$. Consequently,
\begin{align*}
g^{*}_{m}(z) - g_{m}(z) = \arg\max_{y} (f_{y}^{m}(z)\pi_{y}) - \arg\max_{y} (h_y^{m}(z) \pi_{y}) \rightarrow 0.
\end{align*}
It is important to note that while pointwise convergence does not generally imply convergence of the argmax, it does hold in this case. This is because the argmax operation here is applied to a finite and discrete set of $y$ values, and we assume a unique solution for the Bayes optimal classifier with a fixed $z$.

Note that it suffices to assume a unique solution to prove the equivalence. If there are multiple solutions, such that at a given $z$, there are several $y$ that achieves the argmax for $g^{*}_{m}(z)$ as $m$ goes to infinity, it does not matter which $y$ that $g_{m}(z)$ converges to. The discriminant analysis remains asymptotically Bayes optimal by achieving any of the argmax values. Moreover, the classification error remains the same, as long as the maximum conditional density is the same, regardless of how many $y$ values may achieve the argmax.
\end{proof}

In theory, quadratic discriminant analysis, which assumes each class-conditional density has a different covariance matrix, is the most suitable classifier. However, in practice, linear discriminant analysis, which simplifies by assuming class-conditional covariance matrices are all the same, often strikes a better balance between bias and variance. This makes it more numerically stable due to fewer parameters to estimate. Consequently, in the numerical sections, we consistently use linear discriminant analysis.

We should note that Equation~\ref{eqCondition} may not always be satisfied per-class. It holds, for instance, in cases like the stochastic block model where $\kappa(X,U)$ remains constant within a class. Additionally, it holds approximately when the underlying variable $X$ has a small standard deviation within each class, resulting in a value close to $0$ for $Var(E(\kappa(X,U)|V=k, Y=y))$.

\section{Experiments}

\subsection{Simulations}

In this section, we present two representative settings to illustrate the usage and properties of encoder embedding.

The first scenario involves a high-dimensional Gaussian distribution with the following parameters: $p=100$, $K=3$, and $Y$ taking on the values $\{1,2,3\}$ with equal probability. Within each class $k=\{1,2,3\}$, we set $X_k | (Y=k) \sim N(3,1)$, and $X_d | (Y=k) \sim N(1,0.5)$ for $d \neq k$. This setup ensures that the first three dimensions create class separation, while all other dimensions are identical in density across classes. For a given sample size $n$, we generate the sample data $\mathbf{X}$ and corresponding labels $\mathbf{Y}$ based on this multivariate Gaussian setting. We construct the general graph $\mathbf{A}$ by first calculating pairwise Euclidean distances and then applying a distance-to-kernel transformation \citep{DcorKernel}. Subsequently, we employ the sample encoder embedding, which results in a 3-dimensional representation.

The left panels in Figure~\ref{fig1} display the original $\mathbf{X}$ in the top three dimensions and the encoder embedding in 3D for a sample size of $n=1000$. The last panel on the left side of Figure~\ref{fig1} presents the 5-fold cross-validation error using linear discriminant analysis as $n$ increases from $50$ to $500$. For comparison, we also provide the 5-nearest-neighbor classification error using the encoder embedding, as well as the results when applying linear discriminant analysis (LDA) and 5-nearest-neighbor (5NN) classification directly to the original $\mathbf{X}$. This analysis is repeated for $100$ Monte-Carlo replicates, and the standard deviations are all within $1\%$.

The second setting considers a weighted graph generated from a stochastic block model. We still have $K=3$, and the labels $Y$ are distributed as $\{1,2,3\}$ with equal probabilities. The block probability matrix is defined as:
\begin{align*}
\mathbf{B}=
  \left[\begin{matrix}
    0.2 & 0.1 & 0.1 \\
    0.1 & 0.2 & 0.1 \\
    0.1 & 0.1 & 0.2 
  \end{matrix}\right]
\end{align*}
Then we let 
\begin{align*}
\kappa(x,u)= Q_{xu} * \operatorname{Bernoulli}(B(y, v))
\end{align*}
be the pairwise function, where $Q_{xu}$ is a random weight uniformly distributed in $[0,10]$. Then, for a given sample size $n$, we generate the weighted graph $\mathbf{A}$ and corresponding labels $\mathbf{Y}$ based on the given distribution, and apply the sample method to compute the embedding. The right panels in Figure~\ref{fig1} visualize the adjacency matrix $\mathbf{A}$ in heatmap, followed by the encoder embedding in 3D at $n=1000$. The last panel on the right in Figure~\ref{fig1} reports the 5-fold error using linear discriminant analysis as $n$ increases from $50$ to $500$. For comparison, we also report the 5-nearest-neighbor error using the encoder embedding, as well as applying linear discriminant analysis and 5-nearest-neighbor to the adjacency spectral embedding (ASE) of $\mathbf{A}$ into $d=30$. This process is repeated for $100$ different Monte-Carlo replicates, and all standard deviations are within $1\%$.

From Figure~\ref{fig1}, it is evident that the encoder embedding successfully separates the three classes of data and even outperforms the original data in terms of separability. This improved separability translates to a better classification error, as demonstrated in the bottom panels. On the bottom left panel, the encoder embedding achieves better finite-sample performance than using the original Euclidean data, which is noteworthy because LDA applied to the original data is also asymptotically optimal. On the bottom right panel, the encoder embedding outperforms the spectral embedding method, which are proven to be asymptotically optimal in binary graph classification \citep{SussmanEtAl2012, Priebe2019}.

Given the encoder embedding, both LDA and 5NN classifiers perform very well, with LDA being marginally better as the sample size $n$ increases. This indicates that the encoder embedding provides a strong foundation for classification tasks. Note that in both cases, because $K=3$ and each class is equally likely, the random-chance error is $2/3$.

\begin{figure}[htbp]
	\centering
	\includegraphics[width=0.85\textwidth,trim={0cm 0cm 0cm 0cm},clip]{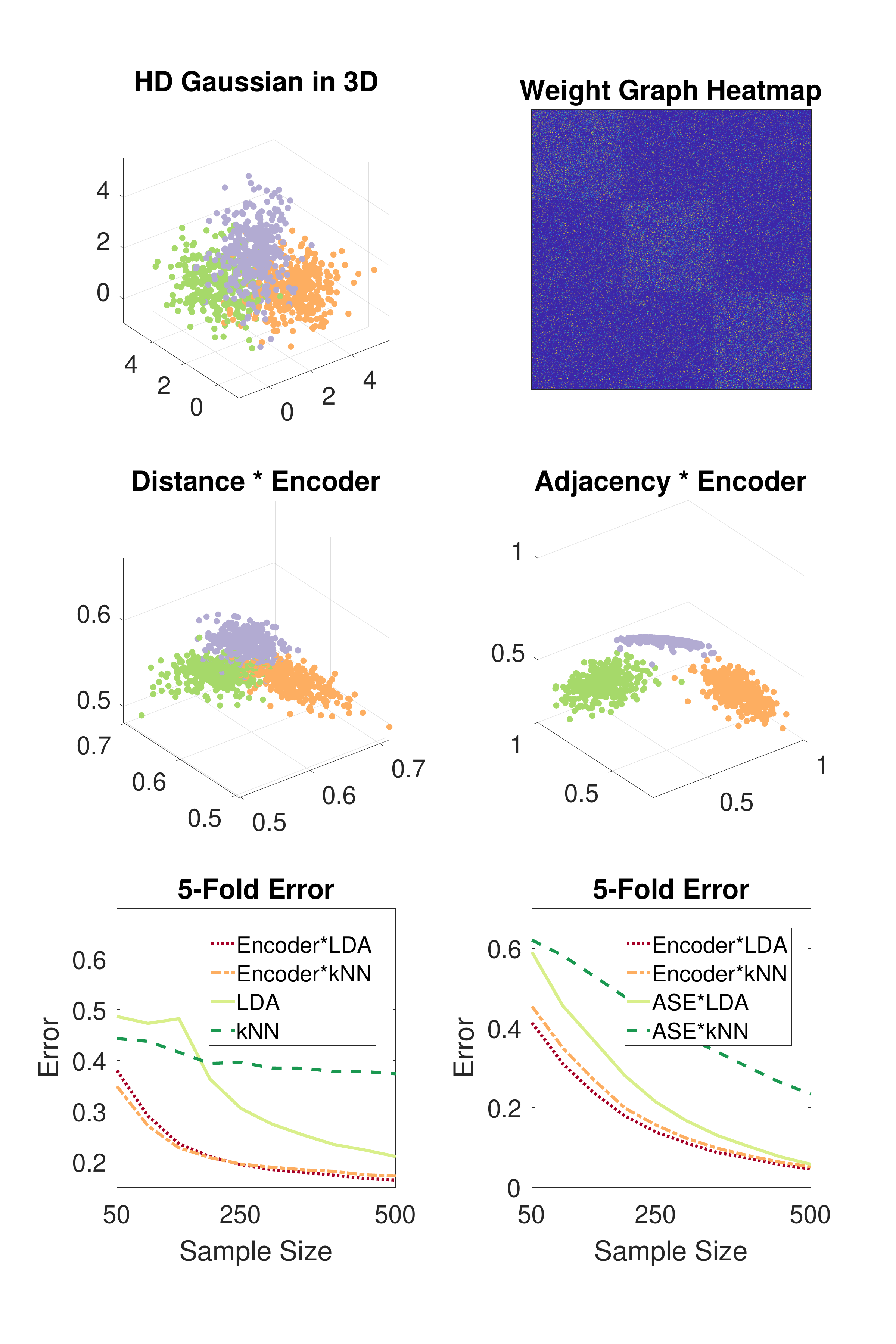}
	\caption{\normalsize This figure provides visualizations of the original data, the embedded data, and the resulting 5-fold classification error for both multivariate Gaussian data (on the left) and a weighted stochastic block model (on the right). In the embedding visualization, different colors represent observations from different classes.}
	\label{fig1}
\end{figure}

As another simulation example, we consider the exact same high-dimensional Gaussian distribution, except with $K=4$, and visualize three different metric choices followed by encoder embedding. From Figure~\ref{fig2}, it is evident that the encoder embedding performs effectively for any metric choice, whether it is based on Euclidean distance, Spearman rank correlation \citep{KendallBook}, or inner product, and all groups exhibit relative separability. The same holds true for higher values of $K$, although visualizing the separation in 3D becomes challenging. Real data experiments considered data with much larger $K$, demonstrating the method's effectiveness for such scenarios as well. In addition, it is worth noting that while this simulation is straightforward and most distance metrics work equally well, in practice, certain metrics may outperform others for real data with complex structures. 

\begin{figure}[htbp]
	\centering
	\includegraphics[width=0.85\textwidth,trim={0cm 0cm 0cm 0cm},clip]{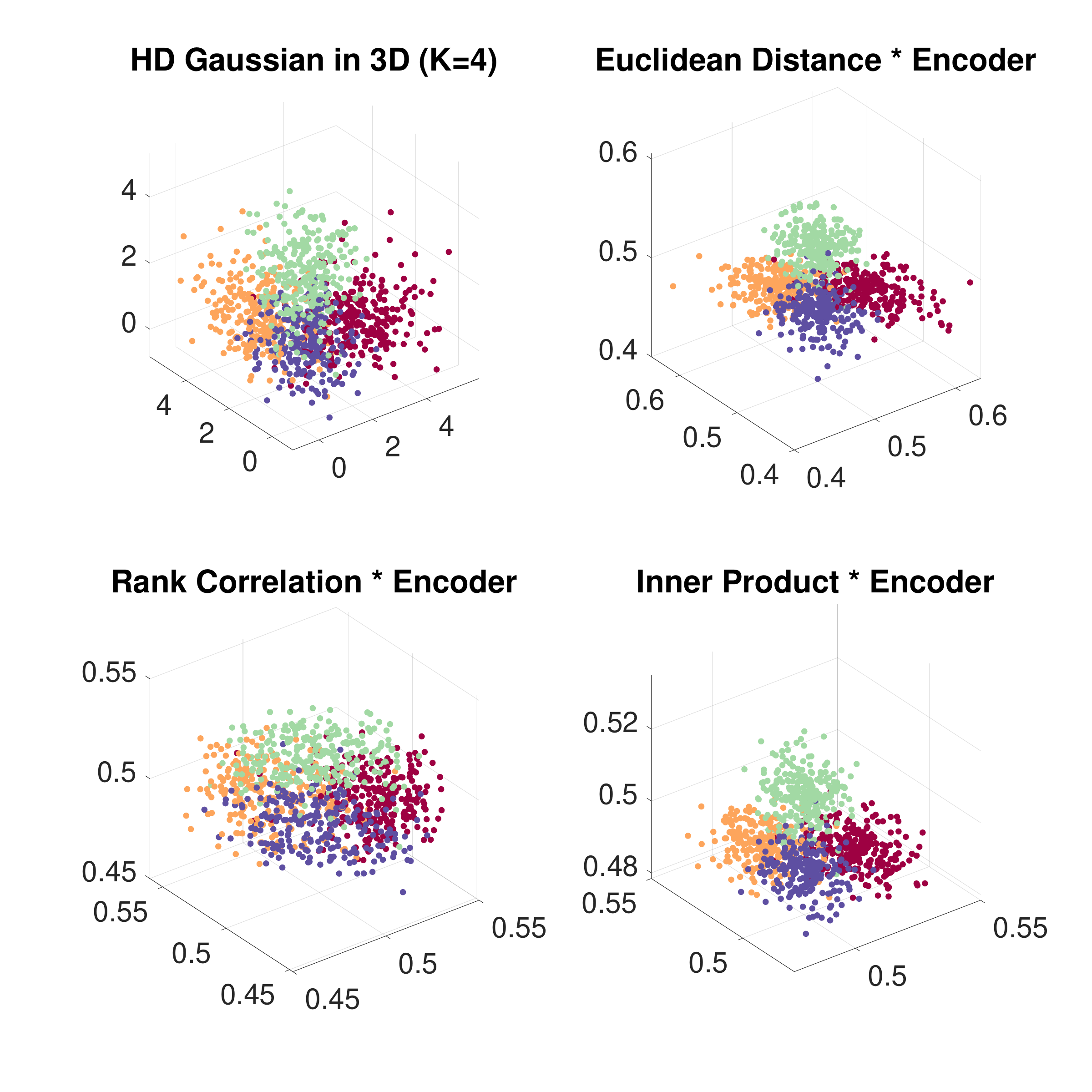}
	\caption{\normalsize This figure visualizes the original data and the embedded data using three different graph transformations.}
	\label{fig2}
\end{figure}

\subsection{Real Data}

In this section, we present the classification performance of the encoder embedding using a wide range of real-world data. The dataset includes three text datasets, two face image datasets, and two weighted graphs. Here are the details of each dataset:
\begin{itemize}
\item The Cora dataset \citep{mccallum2000automating} is a citation network with $2708$ papers and $7$ classes. Each paper has a $1433$-dimensional binary attribute indicating the absence or presence of corresponding words from a dictionary.
\item The Citeseer data \citep{giles1998citeseer} is another citation network with $3312$ papers and $6$ classes, with an attribute of $3703$ dimensions for words presence. 
\item ISOLET spoken data language \citep{cole1990spoken,chapelle2008optimization} is a database of spoken English letters, consisting of $7797$ spoken letters and $617$ dimensions. 
\item The Extended Yale B database \citep{GeorphiadesBelheumeurKriegman2001, LeeHoKriegman2005} contains $2414$ face images of $38$ individuals under various poses and lighting conditions. The images are resized to $32x32$ pixels.
\item CMU PIE face images \citep{SimBakerBsat2003,HeEtAl2005} consists of $11554$ images from $K=68$ persons. 
\item The Wikipedia dataset \citep{GCCAJMVA,ManifoldPRL} includes two weighted graph representations, based on English text features and French text features. It comprises $1382$ Wikipedia articles and $5$ disjoint classes.
\end{itemize}

For each text and image dataset, except for the Wikipedia dataset that are already weighted graphs, we transform them into a general graph using Spearman rank correlation. This choice proves to be more effective than using Euclidean distance. The use of correlation or cosine distance is well-established in text analysis \citep{BleiNgJordan2003}, and the Spearman version is a rank-based approach that is more robust. This approach is also suitable for face images, as they are known to lie on different subspaces per class \citep{LeeHoKriegman2005, CaiEtAl2007}, making correlation or cosine-based metric a better candidate for these data types.

For each dataset and each method, we conducted 5-fold cross-validation, repeating the process $50$ times. The results, including the average classification error and standard deviation for the encoder embedding followed by linear discriminant analysis (LDA) or 5-nearest-neighbor (5NN), are summarized in Table 1. We also report three benchmarks: linear discriminant analysis (LDA), 5-nearest-neighbor (5NN), and a two-layer neural network (two-layer NN) using MATLAB's fitcnet method\footnote{\url{https://www.mathworks.com/help/stats/fitcnet.html}}, with a neuron size of 100 and all parameters set to their default values. These three benchmarks are applied directly to the raw data without the graph transformation.

Despite the diversity of datasets, the encoder embedding, whether applied to a transformed general graph or a weighted graph directly, consistently demonstrates excellent classification performance when coupled with linear discriminant analysis. It either achieves the best classification error or comes very close to the best error across all the real data experiments. While the encoder embedding with 5-nearest-neighbor also performs relatively well, there are cases where it falls significantly behind in terms of classification performance. 

\begin{table}[ht]
\renewcommand{\arraystretch}{1.3}
\small
\centering
{\begin{tabular}{|c||c|c|c|c|c|}
 \hline
Error & Encoder*LDA & Encoder*5NN & LDA & 5NN & Two-Layer NN \\
\hline
Cora &31.5$\pm$0.3\%  & 30.6$\pm$0.5\% & 40.2$\pm$0.7\% &54.9$\pm$0.6\% &\textbf{26.5}$\pm$0.5\%\\
Citeseer &  \textbf{29.7}$\pm$0.1\%  & 31.9$\pm$0.4\% &61.4$\pm$1.4\% &61.1$\pm$0.8\% &32.3$\pm$0.6\%\\
Isolet & 8.4$\pm$0.1\%  & 19.1$\pm$0.2\% &5.7$\pm$0.1\% &11.4$\pm$0.2\% &\textbf{5.4}$\pm$0.3\%\\
\hline
Yale B & \textbf{1.2}$\pm$0.1\% & 27.4$\pm$0.8\%   &3.6$\pm$0.2\%  &34.0$\pm$0.8\% &88.6$\pm$3.0\% \\
PIE & 5.6$\pm$0.1\% & 24.5$\pm$0.3\% & \textbf{5.2}$\pm$0.1\% &15.8$\pm$0.3\% &94.3$\pm$1.3\%\\
\hline
Wiki TE & 19.3$\pm$0.4\%  & \textbf{17.9}$\pm$0.4\%  &25.4$\pm$0.3\% &22.1$\pm$0.5\% &19.4$\pm$0.9\%\\
Wiki TF &\textbf{17.7}$\pm$0.4\% & 18.0$\pm$0.4\% & 23.0$\pm$0.3\% &22.7$\pm$0.7\% &18.8$\pm$0.8\%\\
\hline
\hline
Time & Encoder*LDA & Encoder*5NN & LDA & 5NN & Two-Layer NN \\
\hline
Cora & 0.01 $\pm$ 0.00  & 0.01 $\pm$ 0.00 & 1.9 $\pm$ 0.1 & 0.32 $\pm$ 0.02 & 0.87 $\pm$ 0.05\\
Citeseer & 0.02 $\pm$ 0.01  & 0.02 $\pm$ 0.01 & 13.7 $\pm$ 0.5 & 1.6 $\pm$ 0.08 & 3.2 $\pm$ 0.4\\
Isolet & 0.12 $\pm$ 0.01  & 0.08 $\pm$ 0.01 & 1.5 $\pm$ 0.3 & 1.1 $\pm$ 0.1 & 5.7 $\pm$ 1.2\\
\hline
Yale B & 0.12 $\pm$ 0.03  & 0.03 $\pm$ 0.05 & 2.4 $\pm$ 1.1 & 0.2 $\pm$ 0.02 & 23.2 $\pm$ 6.4\\
PIE & 0.64 $\pm$ 0.04  & 0.30 $\pm$ 0.03 & 7.0 $\pm$ 3.0 & 3.8 $\pm$ 0.9 & 63.8 $\pm$ 21.6\\
\hline
Wiki TE & 0.01 $\pm$ 0.00  & 0.01 $\pm$ 0.00 & 0.19 $\pm$ 0.1 & 0.18 $\pm$ 0.1 & 2.2 $\pm$ 0.8\\
Wiki TF & 0.01 $\pm$ 0.00  & 0.01 $\pm$ 0.00 & 0.19 $\pm$ 0.1 & 0.18 $\pm$ 0.1 & 2.1 $\pm$ 0.6\\
\hline
\end{tabular}
\caption{\normalsize The table presents the classification errors and running times using the encoder embedding, in comparison to standard classifiers. The average results and one standard deviation are reported after $50$ random replicates. For each dataset, the best error is highlighted in bold. The running time only considers the method itself, excluding any input processing, such as converting raw data to a distance matrix.}
}
\end{table}

\section{Conclusion}
This paper presented the asymptotic theoretical properties of the one-hot graph encoder embedding when applied to general graph models, including binary graphs, weighted graphs, distance matrices, and kernel matrices. The experimental section demonstrated the superior performance of the encoder embedding for supervised learning across a diverse range of general graph datasets. 

It is important to note that our benchmarks are relatively straightforward, and there exist many other embedding techniques, classifiers, and advanced neural network architectures that could achieve better performance. Thus, the results of our real data experiments do not imply that encoder embedding is always the optimal choice. Rather, it serves as a viable technique for embedding general graphs, demonstrating its ability to preserve information well during the embedding process. As with any dimension reduction technique, it is possible that some information loss occurs during the embedding, which may lead to slight under-performance in certain cases. This was evident, for example, in the PIE face image data and Isolet data, which warrants further investigations into the distribution of the high-dimensional data.

There are many other areas for future work, such as the selection of distance metrics in general encoder embedding, the incorporation of multiple general graphs, the exploration of high-dimensional geometric implications, and the development of faster computation methods without explicit graph transformation. These directions could lead to further methodological improvements and deepen our understanding of general graph structures and their relationship to high-dimensional data.

\section*{Declarations}

\begin{itemize}
\item Availability of data and materials: The datasets generated and/or analysed during the current study are available in Github, [https://github.com/cshen6/GraphEmd].
\item Funding: This work was supported in part by the National Science Foundation DMS-2113099, and by funding from Microsoft Research.
\item Acknowledgements: We thank the editor and anonymous reviewers for providing valuable feedback that improved the exposition of the paper.
\end{itemize}

\bibliography{shen,general}

\end{document}